\NeedsTeXFormat{LaTeX2e}

\pdfoutput=1

\documentclass{new_tlp}
\usepackage{mathptmx}
\usepackage{amsmath}
\usepackage{amssymb}
\usepackage{xspace}
\usepackage[ruled,linesnumbered,vlined]{algorithm2e}
\usepackage{booktabs}
\usepackage{url}

\usepackage{enumitem}


\usepackage{graphicx}

\usepackage{amsthm}
\newtheorem{theorem}{Theorem}[section]

\newtheorem{proposition}{Proposition}[section]

\theoremstyle{definition}
\newtheorem{example}{Example}


\usepackage{listings}
\makeatletter
\lst@Key{countblanklines}{true}[t]{\lstKV@SetIf{#1}\lst@ifcountblanklines}
\lst@AddToHook{OnEmptyLine}{%
    \lst@ifnumberblanklines\else%
    \lst@ifcountblanklines\else%
    \advance\c@lstnumber-\@ne\relax%
    \fi%
    \fi}
\makeatother
\lstdefinelanguage{asp}{
    breakatwhitespace=true,
    captionpos=b,
    numbers=left,
    numbersep=5pt,
    numberblanklines=false,
    countblanklines=false,
    commentstyle=\color{gray},
    frame=bt, framexbottommargin=5pt, framextopmargin=5pt,
    aboveskip=5pt, belowskip=5pt,
    abovecaptionskip=10pt
}
\lstset{
    mathescape,
    columns=fullflexible,
    basicstyle=\ttfamily\small,
    breaklines=true,
    language=asp,
    escapechar=\% \_,
    upquote=true
}
\lstnewenvironment{asp}{
	\lstset{
		language=asp,
		showstringspaces=false,
        keepspaces=true,
		formfeed=\newpage,
		tabsize=4,
		numbers=none,
		breaklines=true,
		literate={~} {$\sim$}{1},
    frame=none,
	}
}{}

\newcommand{\tr}{\ensuremath{\mathit{tr}}\xspace}
\newcommand{\sk}{\ensuremath{\mathit{sk}}\xspace}
\newcommand{\he}{\ensuremath{\mathit{he}}\xspace}
\newcommand{\UP}{\ensuremath{\mathit{UP}}\xspace}
\newcommand{\z}{\ensuremath{\phantom{0}}}

\title[Modal Logic S5 Satisfiability in Answer Set Programming]{Modal Logic S5 Satisfiability\\ in Answer Set Programming}

\author[M. Alviano, S. Batsakis, and G. Baryannis]{
    MARIO ALVIANO\\
    University of Calabria, Italy\\
    \email{alviano@mat.unical.it}
    \and
    SOTIRIS BATSAKIS\\
    Technical University of Crete, Greece and University of Huddersfield, UK\\
    \email{s.batsakis@hud.ac.uk}
    \and
    GEORGE BARYANNIS\\
    School of Computing and Engineering, University of Huddersfield, UK\\
    \email{g.bargiannis@hud.ac.uk}}

\date{April 2021}

\begin{document}

\maketitle
  \begin{abstract}
Modal logic S5 has attracted significant attention and has led to several practical applications, owing to its simplified approach to dealing with  nesting modal operators. Efficient implementations for evaluating satisfiability of S5 formulas commonly rely on Skolemisation to convert them into propositional logic formulas, essentially by introducing copies of propositional atoms for each set of interpretations (possible worlds).
This approach is simple, but often results into large formulas that are too difficult to process, and therefore more parsimonious constructions are required.
In this work, we propose to use Answer Set Programming for implementing such constructions, and in particular for identifying the propositional atoms that are relevant in every world by means of a reachability relation.
The proposed encodings are designed to take advantage of other properties such as entailment relations of subformulas rooted by modal operators.
An empirical assessment of the proposed encodings shows that the reachability relation is very effective and leads to comparable performance to a state-of-the-art S5 solver based on SAT, while entailment relations are possibly too expensive to reason about and may result in overhead. This paper is under consideration for acceptance in TPLP.
  \end{abstract}

  \begin{keywords}
    Modal Logic, S5, Answer Set Programming, Kripke semantics
  \end{keywords}
  
\section{Introduction}\label{sec:introduction}


Modal logics have been extensively studied in the context of knowledge representation and reasoning and, more broadly, artificial intelligence, providing the ability to qualify truth according to different modalities, most commonly the alethic modalities of necessity and possibility. Several practical applications of modal logics have been proposed, such as analysing syntax structures and natural language semantics in linguistics~\cite{Moss2007}, belief and trust in multi-agent systems~\cite{Liau2003} and weak models of distributed computing~\cite{Hella2015}. Applications are even more extensive if we consider other modalities such as epistemic logic, used to reason about knowledge and belief states of agents~\cite{moses2008reasoning}, deontic logic, used in legal representation and reasoning~\cite{Batsakis2018} and temporal logics, which form the basis of significant formal verification and model checking approaches, beginning with Amir Pnueli's seminal work~\cite{Pnueli1977}.

The main approach for defining the semantics of modal logics is Kripke (or relational) semantics~\cite{kripke1959completeness}, with formulas interpreted according to a set of possible worlds, in which formulas can be true or false, and an accessibility relation that determines which worlds are accessible assuming that we are at one of them. Kripke semantics have been used to model several different modal logic systems, including S5, in which the accessibility relation is actually an equivalence relation, since it satisfies the reflexive, symmetric and transitive properties. An additional attractive feature of S5 is that sequences of modal operators applied on a formula are simplified by pruning all but the one closest to the formula. 

Significant research effort has been devoted to solving the satisfiability problem for S5 (S5-SAT), which has been proven to be NP-complete  \cite{ladner1977computational}, exploring various established proof methods such as tableau~\cite{Gore1999}, resolution~\cite{Nalon2017} and propositional satisfiability (SAT). As argued by \citeN{huang2019solving}, SAT-based methods for modal logic satisfiability have increasingly shown potential, both due to improvements in modern SAT solvers and because of their ability to learn from conflicts and balance between guessing and reasoning. Reducing S5-SAT to SAT commonly involves Skolemisation, introducing fresh Boolean variables to denote truth values in different possible worlds, which often results in quite large formulas that are difficult to process. Recently proposed S5-SAT algorithms tackle this issue in different ways: notably S52SAT~\cite{caridroit2017sat} relies on improved upper bounds on the number of possible worlds required to find a model as well as structural caching, while \textsc{S5Cheetah}~\cite{huang2019solving} benefits from formula normalisation and a graph-based representation of conflicts between modalised literals.

In this work, we propose to use Answer Set Programming (ASP)~\cite{lifschitz2019answer,brewka2011answer} to provide parsimonious propositional representations of S5 formulas and rely on ASP solvers for the satisfiability problem of S5. The motivation behind this proposal lies in the close relationship between ASP and SAT and the readability and configurability afforded by ASP encodings due to their logic programming nature~\cite{Baryannis2018,Baryannis2020}. We first provide a complete ASP encoding of the S5 normal form introduced by~\citeN{huang2019solving}. Then, we introduce an optimisation to the base ASP encoding exploiting the fact that only some propositional atoms are reachable through propagation based on the existence of the possibility modal operator. We also look at additional modifications based on known entailment and conflict relations between modal operators. 

The main contributions of this paper can be summarised as follows:
\begin{itemize}
    \item We propose an approach to convert an S5 formula to an ASP program that is applicable to any formula that has been normalised according to the S5 normal form introduced by~\citeN{huang2019solving}.
    \item An optimised version of the base ASP encoding is also proposed, that benefits from materialising only those atoms that are reachable. We also explore additional modifications: modal subsets (i.e. detecting if modal operators are applied on literals which are subsets of literals with the same modal operators) and modal conflicts (i.e. detecting if specific combinations of modal operators lead to conflicts).
    \item \textsc{S5py}, an ASP-based solver for modal logic S5 satisfiability is implemented and evaluated. The solver automatically converts arbitrary S5 formulas into their equivalent S5 normal forms and then to ASP programs. The solver may apply some of the optimisations/modifications depending on runtime options in order to find a model.
\end{itemize}

The proposed ASP-based S5-SAT solver is experimentally evaluated to determine the relative efficiency of different encodings and the benefits of each optimisation, confirming that the reachable atoms grounding optimisation achieves the most significant improvement, while modifications based on entailment and conflicts may result in overhead due to them being potentially expensive to reason about. Additionally, we compare different versions of the encoding (including some or all of the modifications) with the state-of-the-art solver \textsc{S5Cheetah} looking at the trade-off between the number of solved instances and the required execution time showing that the proposed solver achieves comparable performance.

The rest of this paper is organised as follows. Section~\ref{sec:background} summarises background knowledge on modal logics, S5 and S5 normal form, as well as related research on S5 satisfiability solving. Section~\ref{sec:relations} provides a formalisation of the reachability relation for the proposed optimisation and other concepts relevant to the modal subsets and modal conflicts modifications, while Section~\ref{sec:implementation} details all ASP encodings, including the base encoding of an S5 formula in S5 normal form and the proposed optimisations and modifications. Section~\ref{sec:evaluation} presents and discusses the implementation and experimental evaluation of the ASP-based S5-SAT solver, Finally, Section~\ref{sec:conclusions} concludes and points out future research directions.

\section{Background and Related Work}\label{sec:background}

In this section we briefly summarise preliminaries about modal logic S5, beginning with its syntax and semantics in Section~\ref{sec:modalS5} and the S5 normal form introduced by~\citeN{huang2019solving} in Section~\ref{sec:normal}. This is followed by a presentation of existing reasoning approaches for S5 in Section~\ref{sec:S5solvers}.


\subsection{Modal Logic S5}\label{sec:modalS5}

S5 extends propositional logic with the modal operators $\Box$ for encoding \emph{necessity} and $\Diamond$ for encoding \emph{possibility}. The language is defined by the grammar
\begin{equation}\label{eq:s5-grammar}
    \phi := p \mid \neg\phi \mid \phi \wedge \phi \mid \phi \vee \phi \mid \Box \phi \mid \Diamond \phi
\end{equation}
where $p$ is a \emph{propositional atom} among those of a fixed countably infinite set $\mathcal{A}$.
Moreover, logical connectives for implication and equivalence are used as syntactic sugar with the usual meaning, i.e. 
$\phi \rightarrow \psi := \neg\phi \vee \psi$ and
$\phi \leftrightarrow \psi := (\phi \rightarrow \psi) \wedge (\psi \rightarrow \phi)$, for every pair of formulas $\phi$ and $\psi$.
The \emph{complement} of a propositional literal is defined as usual, i.e. $\overline{p} = \neg p$ and $\overline{\neg p} = p$ for all $p \in \mathcal{A}$, and the notation is naturally extended to sets of propositional literals.

The semantics of S5 formulas is given by \emph{Kripke structures}, that is, non-empty sets of \emph{worlds} connected by an accessibility relation;
the accessibility relation can be assumed to be total for S5 formulas \cite{DBLP:journals/apal/Fitting99}, so for the purposes of this paper only the set of worlds will be used.
A \emph{world} is an interpretation of propositional logic, that is, a function $I$ assigning a truth value of either 0 (false) or 1 (true) to every propositional atom in $\mathcal{A}$.
Interpretations are usually represented by the sets of propositional atoms that are assigned a value of true.
Let $\mathbf{I}$ be the list $[I_0,\ldots,I_n]$ of worlds, for some $n \geq 0$, and let $0 \leq i \leq n$.
The \emph{satisfiability relation} $\models$ for S5 formulas is defined as follows:
\begin{itemize}
    \item $(\mathbf{I},i) \models p$ iff $I_i(p) = 1$
    \item $(\mathbf{I},i) \models \neg\phi$ iff $(\mathbf{I},i) \not\models \phi$
    \item $(\mathbf{I},i) \models \phi \wedge \psi$ iff $(\mathbf{I},i) \models \phi$ and $(\mathbf{I},i) \models \psi$
    \item $(\mathbf{I},i) \models \phi \vee \psi$ iff $(\mathbf{I},i) \models \phi$ or $(\mathbf{I},i) \models \psi$
    \item $(\mathbf{I},i) \models \Box\phi$ iff $(\mathbf{I},j) \models \phi$ for all $j \in [0..n]$
    \item $(\mathbf{I},i) \models \Diamond\phi$ iff $(\mathbf{I},j) \models \phi$ for some $j \in [0..n]$
\end{itemize}
The \emph{satisfiability problem} associated with S5 is the following:
given an S5 formula $\phi$, is there a list $\mathbf{I} = [I_0,\ldots,I_n]$ (for some $n \geq 0$) such that $(\mathbf{I}, 0) \models \phi$?

\subsection{S5 Normal Form}\label{sec:normal}

A \emph{propositional literal} $\ell$ is either a propositional atom or its negation.
A \emph{$\Box$-literal} has the form $\Box(\ell_1 \vee \cdots \vee \ell_n)$, where $n \geq 1$ and $\ell_1,\ldots,\ell_n$ are propositional literals.
A \emph{$\Diamond$-literal} has the form $\Diamond(\ell_1 \wedge \cdots \wedge \ell_n)$, where $n \geq 1$ and $\ell_1,\ldots,\ell_n$ are propositional literals.
An \emph{S5-literal} is a propositional literal, a $\Box$-literal, or a $\Diamond$-literal.
A disjunction of S5-literals is called an \emph{S5-clause}.
A formula $\phi$ is in \emph{S5 normal form (S5-NF)} if $\phi$ is a conjunction of S5-clauses.
Let $\mathit{atoms}(\phi)$ and $\mathit{lits}(\phi)$ denote the sets of propositional atoms and literals occurring in $\phi$, respectively.

Let $\phi,\psi$ be S5 formulas, and $p$ be a propositional atom.
Let $\psi[\phi/p]$ denote the formula obtained by \emph{substituting} every occurrence of $p$ in $\psi$ with $\phi$.
Every S5 formula $\psi$ can be transformed into an equi-satisfiable S5-NF formula by applying the transformation $\mathit{tr}$ below to its subformulas \cite{huang2019solving}.
Let $p$ be a \emph{fresh} propositional atom (i.e. an atom not occurring in $\psi$), $n \geq m \geq 1$, and for all $i \in [1..n]$ let $\phi_i$ be an S5 formula and $\odot_i \in \{\Box, \Diamond\}$.
Transformation $\mathit{tr}$ is defined by the following transformation rules:
\begin{enumerate}
    \item 
    $\neg\neg\phi_1 \overset{\tr}{\mapsto} \phi_1$, 
    $\neg(\phi_1 \wedge \cdots \wedge \phi_n) \overset{\tr}{\mapsto} \neg\phi_1 \vee \cdots \vee \neg\phi_n$, 
    $\neg(\phi_1 \vee \cdots \vee \phi_n) \overset{\tr}{\mapsto} \neg\phi_1 \wedge \cdots \wedge \neg\phi_n$,\\
    $\neg\Box\phi_1 \overset{\tr}{\mapsto} \Diamond\neg\phi_1$, and
    $\neg\Diamond\phi_1 \overset{\tr}{\mapsto} \Box\neg\phi_1$;
    \item
    $\odot_1 \cdots \odot_n\phi_n \overset{\tr}{\mapsto} \odot_n\phi_n$, 
    $\Box(\phi_1 \wedge \cdots \wedge \phi_n) \overset{\tr}{\mapsto} \Box\phi_1 \wedge \cdots \wedge \Box\phi_n$, and\\
    $\Diamond(\phi_1 \vee \cdots \vee \phi_n) \overset{\tr}{\mapsto} \Diamond\phi_1 \vee \cdots \vee \Diamond\phi_n$;
    \item $\Box(\phi_1 \vee \cdots \vee \phi_m \vee \odot_{m+1}\phi_{m+1} \vee \cdots \vee \odot_n\phi_n) \overset{\tr}{\mapsto} \Box(\phi_1 \vee \cdots \vee \phi_m) \vee \odot_{m+1}\phi_{m+1} \vee \ldots \vee \odot_n\phi_n$;
    \item $\Diamond(\phi_1 \wedge \cdots \wedge \phi_m \wedge \odot_{m+1}\phi_{m+1} \wedge \cdots \wedge \odot_n\phi_n) \overset{\tr}{\mapsto} \Diamond(\phi_1 \wedge \cdots \wedge \phi_m) \wedge \odot_{m+1}\phi_{m+1} \wedge \ldots \wedge \odot_n\phi_n$;
    \item $\psi[\phi_1 \vee \cdots \vee \phi_m \vee (\phi_{m+1} \wedge \cdots \wedge \phi_n)/p'] \overset{\tr}{\mapsto} \psi[(\phi_1 \vee \cdots \vee \phi_m \vee p)/p'] \wedge \Box(\neg p \vee \phi_{m+1}) \wedge \cdots \wedge \Box(\neg p \vee \phi_n)$;
    \item $\psi[\phi_1 \wedge \cdots \wedge \phi_m \wedge (\phi_{m+1} \vee \cdots \vee \phi_n)/p'] \overset{\tr}{\mapsto} \psi[\phi_1 \wedge \cdots \wedge \phi_m \wedge p/p'] \wedge \Box(\neg p \vee \phi_{m+1} \cdots \vee \phi_n)$.
\end{enumerate}
Note that transformation rules in items~1--4 can be applied working top-down on the tree structure of the formula, and since they are local to subformulas we opted for a simpler notation (e.g. the rule for double negation elimination in item~1 is actually $\psi[\neg\neg\phi_1/p'] \overset{\tr}{\mapsto} \psi[\phi_1/p']$, that is, all occurrences of $\neg\neg\phi_1$ in the input formula $\psi[\neg\neg\phi_1/p']$ are transformed into $\phi_1$).
Moreover, the above transformation rules are intended to be applied in the provided order, and in particular rules~5--6 are defined based on this assumption and are applied working bottom-up on the tree structure of the formula unless the formula is already in S5-NF.
In fact, these transformation rules are introducing a fresh propositional atom $p$ to compactly represent a conjunction or a disjunction, like in Tseitin's transformation, but using less clauses as in the algorithm of~\citeN{DBLP:journals/jsc/PlaistedG86}. This is achieved by exploiting the fact that the formula is already in negation normal form thanks to rules in item~1 (there is no negation in the path connecting the subformula to the root).
Also note that after applying rules~5--6, rule~3 becomes applicable again (and possibly in turn the first rule in item~2).
In the following, if not otherwise said, formulas are assumed to be in S5-NF.

\subsection{Modal Logic Solvers}\label{sec:S5solvers}
As explained in the introduction, modal logics have many practical applications which have led researchers to explore efficient tools for reasoning problems such as satisfiability. Early approaches that are adopted in practice for reasoning over several modal logics are translation to first-order logic \cite{ohlbach1991semantics} and resolution-based methods \cite{auffray1990strategies} applicable to S4, K, Q, T and K4 modal logics. The \textsc{KsP} solver \cite{nalon2017ksp} is a recent contribution that adopts the resolution-based method and is applicable to the propositional multimodal logic $K_{n}$. Tableau-based reasoners applicable for modal logics include: \textsc{Spartacus} \cite{gotzmann2010spartacus} for basic modal logic K; \textsc{LoTREC} \cite{gasquet2005lotrec} which covers a wide range of modal and description logics; and the \textsc{LCK} implementation presented by \citeN{abate2007cut} which is also applicable to several modal and temporal logics. 

Performance improvements in modern SAT solvers has led to the development of SAT-based solvers for modal logics. Examples include \textsc{KSAT} \cite{giunchiglia2000sat} for modal logic K, which is shown to outperform contemporary tableau and translation-based approaches, and \textsc{Km2SAT} \cite{sebastiani2009automated}, which is applicable to $K_{m}$ modal logic and \textit{ALC} description logic. \textsc{InKreSAT} \cite{kaminski2013inkresat} is based on an incremental SAT solver and can deal with modal logics K, T, S4 and K4. \textsc{S52SAT} \cite{caridroit2017sat} goes one step beyond these solvers as it addresses the satisfiability problem for S5, which is not supported by most of the other SAT-based solvers. In addition, \textsc{S52SAT} estimates an upper bound of possible worlds using the \emph{diamond degree} of a formula and this tight bound offers a noticeable performance gain.

Following this line of research, the SAT-based \textsc{S5Cheetah} solver \cite{huang2019solving} estimates an upper bound on possible worlds by applying the \emph{graph colourability heuristic}, which is used for identifying non interacting $\Diamond$-literals that can be materialised in the same world. \textsc{S5Cheetah} can be considered a state-of-the-art S5 solver as it outperforms both the \textsc{LCK} implementation by \citeN{abate2007cut} and the \textsc{S52SAT} solver~\cite{caridroit2017sat}. For this reason, our proposed ASP-based solving approach, formalised in the following section and presented in Section~\ref{sec:implementation}, is compared against \textsc{S5Cheetah} in the experiments discussed in Section~\ref{sec:evaluation}.

\section{S5 Satisfiability Checking via Skolemisation}\label{sec:relations}

In this section, we formalise the proposed approach to checking satisfiability of S5-NF formulas on which the ASP encoding in Section~\ref{sec:implementation} is based. S5 formulas, in general, can be transformed into monadic first-order formulas associating every propositional atom $p$ with a (unary) predicate $p$;
arguments of those predicates are worlds, and therefore variables are universally quantified for $\Box$-literals and existentially quantified for $\Diamond$-literals.
Moreover, since we are interested in the satisfiability problem, the existential closure of the formula is actually checked for satisfiability.
Finally, existential variables can be eliminated by Skolemisation, that is, by replacing them with fresh constants.

Formally, let $\phi$ be an S5-NF formula, and let us fix an enumeration $\Box\psi^\Box_1, \ldots, \Box\psi^\Box_m$, $\Diamond\psi^\Diamond_1, \ldots,$ $\Diamond\psi^\Diamond_n$ of its $\Box$- and $\Diamond$-literals, for some $m \geq 0$ and $n \geq 0$.
Let $\psi(x)$ denote the monadic first-order formula obtained from $\psi$ by adding argument $x$ to all propositional atoms occurring in $\psi$.
The \emph{Skolemisation} of $\phi$, denoted $\sk(\phi)$, is defined by the following transformation rules (applied in the given order):
\begin{enumerate}
    \item $\Box\psi^\Box_i \overset{\sk}{\mapsto} \forall x\ \psi^\Box_i(x)$, for all $i \in [1..m]$;
    \item $\Diamond\psi^\Diamond_i \overset{\sk}{\mapsto} \psi^\Diamond_i(i)$, for all $i \in [1..n]$;
    \item $p \overset{\sk}{\mapsto} p(0)$, for the remaining propositional literals.
\end{enumerate}
Note that the second transformation rule uses the Skolem constant $i$ to Skolemise formula \linebreak $\exists x\ \psi^\Diamond_i(x)$, and the third transformation rule uses the Skolem constant $0$ to Skolemise free variables (which are subject to the existential closure). Hence, in order to check satisfiability of an S5-NF formula $\phi$, one can equivalently check satisfiability of the \emph{Herbrand expansion} of $\sk(\phi)$, in order to take advantage of modern SAT solvers, for instance.

There are several observations on the structure of S5-NF formulas that can be taken into account in order to improve on the set of clauses that result from the Herbrand expansion, and the rest of this section focuses on these. First of all, fresh propositional atoms must be introduced to represent $\Box$- and $\Diamond$-literals, as well as clauses to impose equivalence between every such fresh propositional atom and the associated S5-literal;
as already observed in Section~\ref{sec:normal}, implications are actually sufficient \cite{DBLP:journals/jsc/PlaistedG86}.
If $b_1,\ldots,b_m$ and $d_1,\ldots,d_n$ are such fresh propositional atoms, the following implications are encoded in clauses:
\begin{itemize}
    \item $b_i \rightarrow \psi^\Box_i(0) \wedge \cdots \wedge \psi^\Box_i(n) \equiv (b_i \rightarrow \psi^\Box_i(0)) \wedge \cdots \wedge (b_i \rightarrow \psi^\Box_i(n))$, for all $i \in [1..m]$;
    \item $d_i \rightarrow \psi^\Diamond_i(i)$, for all $i \in [1..n]$.
\end{itemize}
Let $\he(\phi)$ be such a set of clauses, that is, the Herbrand expansion of $\sk(\phi)$.

\begin{example}\label{ex:he}
Let $\phi$ be $p \wedge \Box(p \vee q) \wedge (\Diamond(p \wedge q) \vee \Diamond(\neg p \wedge \neg q))$.
Hence, $\he(\phi)$ is the set of clauses encoding the following formulas:
$p(0) \wedge b_1 \wedge (d_1 \vee d_2)$,
$b_1 \rightarrow p(0) \vee q(0)$,
$b_1 \rightarrow p(1) \vee q(1)$,
$b_1 \rightarrow p(2) \vee q(2)$,
$d_1 \rightarrow p(1) \wedge q(1)$, and
$d_2 \rightarrow \neg p(2) \wedge \neg q(2)$.
There are two lists of distinct worlds that satisfy $\phi$, namely $\mathbf{I_1} = [\{p\}, \{p,q\}]$ and $\mathbf{I_2} = [\{p,q\}]$;
they are represented by the following models of $\he(\phi)$:
$\{b_1, d_1, p(0), p(1), q(1)\} \cup X$ and
$\{b_1, d_1, p(0), q(0), p(1), q(1)\} \cup X$, where $X \in 2^{\{p(2), q(2)\}}$.
\hfill$\blacksquare$
\end{example}

\begin{proposition}
For every S5-NF formula $\phi$, $\he(\phi)$ is equi-satisfiable to $\phi$.
\end{proposition}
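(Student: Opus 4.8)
The plan is to prove the two implications of the equi-satisfiability claim separately, each time translating between a list of worlds $\mathbf{I}$ with $(\mathbf{I},0)\models\phi$ and a propositional assignment satisfying $\he(\phi)$, following the structure of the Skolemisation rules. Throughout, I write $b_1,\ldots,b_m$ and $d_1,\ldots,d_n$ for the fresh atoms representing $\Box\psi^\Box_1,\ldots,\Box\psi^\Box_m$ and $\Diamond\psi^\Diamond_1,\ldots,\Diamond\psi^\Diamond_n$, and I recall that $\he(\phi)$ consists of the top-level clauses of $\phi$ with each $\Box$-literal replaced by the corresponding $b_i$, each $\Diamond$-literal by the corresponding $d_i$, and each free propositional literal $p$ by $p(0)$, together with the implications $b_i\rightarrow\psi^\Box_i(k)$ for $k\in[0..n]$ and $d_i\rightarrow\psi^\Diamond_i(i)$.

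For the direction from $\phi$ to $\he(\phi)$, assume $(\mathbf{I},0)\models\phi$ with $\mathbf{I}=[I_0,\ldots,I_N]$. For each $i\in[1..n]$ with $(\mathbf{I},0)\models\Diamond\psi^\Diamond_i$, fix a witness $w_i\in[0..N]$ such that $I_{w_i}\models\psi^\Diamond_i$, and for the remaining $i$ set $w_i:=0$. Define the assignment $M$ by $M(p(0)):=I_0(p)$, $M(p(i)):=I_{w_i}(p)$ for $i\in[1..n]$, $M(b_i):=1$ iff $(\mathbf{I},0)\models\Box\psi^\Box_i$, and $M(d_i):=1$ iff $(\mathbf{I},0)\models\Diamond\psi^\Diamond_i$. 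Then the clauses $b_i\rightarrow\psi^\Box_i(k)$ hold because $(\mathbf{I},0)\models\Box\psi^\Box_i$ makes $\psi^\Box_i$ true at every world of $\mathbf{I}$, in particular at $I_0$ and every $I_{w_i}$; the clauses $d_i\rightarrow\psi^\Diamond_i(i)$ hold by the choice of $w_i$; and every top-level clause is satisfied because the S5-literal witnessing it at $(\mathbf{I},0)$ is either a propositional literal, evaluated correctly at world $0$, or a $\Box$- or $\Diamond$-literal whose fresh atom has been set to $1$.

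For the converse, assume $M\models\he(\phi)$, and define $\mathbf{I}=[I_0,\ldots,I_n]$ by $I_k(p):=M(p(k))$ for every atom $p$ (extending $M$ by $0$ on atoms not occurring in $\he(\phi)$, which is harmless). I claim $(\mathbf{I},0)\models\phi$, checked one S5-clause $C$ at a time. Since $M$ satisfies the top-level clause obtained from $C$, one of its disjuncts is true under $M$: if it is a literal $p(0)$ or $\neg p(0)$ coming from a propositional literal of $C$, then that literal holds at $(\mathbf{I},0)$ by construction; if it is some $b_i$, the implications $b_i\rightarrow\psi^\Box_i(k)$ give $\psi^\Box_i$ true at $I_k$ for all $k\in[0..n]$, hence $(\mathbf{I},0)\models\Box\psi^\Box_i$; and if it is some $d_i$, then $d_i\rightarrow\psi^\Diamond_i(i)$ gives $\psi^\Diamond_i$ true at $I_i$, and since $i\in[0..n]$ this yields $(\mathbf{I},0)\models\Diamond\psi^\Diamond_i$. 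In all cases $C$ is satisfied, so $(\mathbf{I},0)\models\phi$. This step uses crucially that S5's accessibility relation is total, so $\Box$ and $\Diamond$ quantify over exactly the worlds $0,\ldots,n$ that $\he(\phi)$ speaks about, and that the Skolem constant of the $i$-th $\Diamond$-literal, namely $i$, lies in $[0..n]$.

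I expect the delicate point to be the forward direction, and in particular resisting the shortcut of setting all $b_i$ and $d_i$ to $1$: since $\he(\phi)$ contains only the one-directional implications $b_i\rightarrow\psi^\Box_i(k)$ and $d_i\rightarrow\psi^\Diamond_i(i)$, that shortcut would force every $\Box$- and $\Diamond$-literal of $\phi$ to hold, which need not be the case in the given model. The remedy is to activate a fresh atom only when the associated S5-literal actually holds at $(\mathbf{I},0)$ and, for each such $\Diamond$-literal, to devote world $i$ to one of its witnesses; the use of implications rather than biconditionals is precisely what makes this selective assignment sound. A secondary bookkeeping point is to note that replacing the $N+1$ worlds of $\mathbf{I}$ by the $n+1$ worlds implicit in $\he(\phi)$ preserves every $\Box$-literal (a disjunction true at all worlds of $\mathbf{I}$ remains true on any sub-list of worlds) and every satisfied $\Diamond$-literal (its witness is kept), while keeping world $0$ fixed preserves the free propositional literals.
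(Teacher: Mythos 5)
Your proof is correct; note that the paper states this proposition without proof, treating it as an immediate consequence of the Skolemisation/Herbrand-expansion construction, and your argument is precisely the intended one: map the Herbrand world $i$ to a chosen witness of the $i$-th $\Diamond$-literal (or to world $0$) in one direction, and read the Herbrand worlds off as a Kripke structure in the other. Your closing remark about why the one-directional (Plaisted--Greenbaum style) implications suffice — activating $b_i$ and $d_i$ only for S5-literals actually true at $(\mathbf{I},0)$ — correctly identifies the only delicate point.
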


A second observation concerns the fact that worlds are relevant only if the associated $\Diamond$-literal is true, and therefore for all $i \in [1..m]$ and $j \in [1..n]$, the clause encoding $b_i \rightarrow \psi^\Box_i(j)$ can be replaced by a clause encoding $b_i \wedge d_j \rightarrow \psi^\Box_i(j)$.
Actually, if $\psi^\Diamond_j(0)$ is true, there is again no need to consider world $j$.
Hence, clauses encoding $b_i \rightarrow \psi^\Box_i(j)$ in $\he(\phi)$ are replaced by clauses encoding $b_i \wedge d_j \wedge \neg \mathit{implied}_j \rightarrow \psi^\Box_i(j)$, $\mathit{implied}_j \leftrightarrow \psi^\Diamond_j(0)$ and $\mathit{implied}_j \rightarrow d_j$ to obtain $\mathit{full}(\phi)$, where $\mathit{implied}_j$ is a fresh propositional atom.

\begin{example}[Continuing Example~\ref{ex:he}]
Clauses in $\mathit{full}(\phi)$ encode the following formulas:
$p(0) \wedge b_1 \wedge (d_1 \vee d_2)$,
$b_1 \rightarrow p(0) \vee q(0)$,
$b_1 \wedge d_1 \wedge \neg \mathit{implied}_1 \rightarrow p(1) \vee q(1)$,
$\mathit{implied}_1 \leftrightarrow p(0) \wedge q(0)$,
$\mathit{implied}_1 \rightarrow d_1$,
$b_1 \wedge d_2 \wedge \neg \mathit{implied}_2 \rightarrow p(2) \vee q(2)$,
$\mathit{implied}_2 \leftrightarrow p(0) \wedge q(0)$,
$\mathit{implied}_2 \rightarrow d_2$,
$d_1 \rightarrow p(1) \wedge q(1)$, and
$d_2 \rightarrow \neg p(2) \wedge \neg q(2)$.
The models of $\mathit{full}(\phi)$ are
$\{b_1, d_1, p(0), p(1), q(1)\} \cup X_2$ and
$\{b_1, d_1, \mathit{implied}_1, p(0), q(0)\} \cup X_1 \cup X_2$, where $X_i \in 2^{\{p(i), q(i)\}}$ for $i \in [1..2]$.
\hfill$\blacksquare$
\end{example}

\begin{theorem}
For every S5-NF formula $\phi$, $\mathit{full}(\phi)$ is equi-satisfiable to $\he(\phi)$.
\end{theorem}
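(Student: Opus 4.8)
The plan is to establish the two directions of equi-satisfiability separately, exploiting that $\he(\phi)$ and $\mathit{full}(\phi)$ agree on three families of clauses: the \emph{skeleton} $\sigma$ obtained from $\phi$ by replacing each $\Box$-literal by the fresh atom $b_i$, each $\Diamond$-literal by the fresh atom $d_j$, and each remaining propositional literal $\ell$ by $\ell(0)$; the world-$0$ box clauses $b_i \rightarrow \psi^\Box_i(0)$; and the diamond clauses $d_j \rightarrow \psi^\Diamond_j(j)$. The two sets of clauses differ only in (i) the box clauses for worlds $j \in [1..n]$, which are the strong $b_i \rightarrow \psi^\Box_i(j)$ in $\he(\phi)$ but the weakened $b_i \wedge d_j \wedge \neg\mathit{implied}_j \rightarrow \psi^\Box_i(j)$ in $\mathit{full}(\phi)$, and (ii) the presence in $\mathit{full}(\phi)$ of the fresh atoms $\mathit{implied}_j$ together with $\mathit{implied}_j \leftrightarrow \psi^\Diamond_j(0)$ and $\mathit{implied}_j \rightarrow d_j$. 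The device used in both directions is the observation that an unneeded world can be \emph{overwritten by a copy of world $0$}: if for some $j \in [1..n]$ we redefine $p(j) := p(0)$ for every atom $p$, then $\psi^\Box_i(j)$ takes the truth value of $\psi^\Box_i(0)$ and $\psi^\Diamond_j(j)$ that of $\psi^\Diamond_j(0)$, while $\sigma$, the world-$0$ clauses, all clauses over the $b$-, $d$- and $\mathit{implied}$-atoms, and every box/diamond clause attached to a world other than $j$ are untouched (world $0$ in particular, since $0 \notin [1..n]$).

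For $\mathit{full}(\phi) \Rightarrow \he(\phi)$, I would take a model $M'$ of $\mathit{full}(\phi)$ and overwrite world $j$ by a copy of world $0$ for every $j \in [1..n]$ with $M' \not\models d_j$ or $M' \models \mathit{implied}_j$, leaving everything else as in $M'$. A case analysis shows the result satisfies $\he(\phi)$: $\sigma$ and the world-$0$ box clauses are inherited unchanged; for a world $j$ left intact we have $M' \models d_j$ and $M' \not\models \mathit{implied}_j$, so the weakened box clause of $\mathit{full}(\phi)$ already yields $b_i \rightarrow \psi^\Box_i(j)$, and the diamond clause carries over; for an overwritten world $j$, $\psi^\Box_i(j)$ now mirrors $\psi^\Box_i(0)$ and hence holds whenever $b_i$ does, while the diamond clause is vacuous when $d_j$ is false and, when $\mathit{implied}_j$ holds, $\mathit{implied}_j \leftrightarrow \psi^\Diamond_j(0)$ forces $\psi^\Diamond_j(0)$ --- hence $\psi^\Diamond_j(j)$ --- true.

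For $\he(\phi) \Rightarrow \mathit{full}(\phi)$, I would take a model $M$ of $\he(\phi)$ and build $M'$ by setting $M'(\mathit{implied}_j) := 1$ exactly when $\psi^\Diamond_j(0)$ is true in $M$, and for every such $j$ additionally setting $M'(d_j) := 1$ and overwriting world $j$ by a copy of world $0$; all other values are copied from $M$. Checking the clauses of $\mathit{full}(\phi)$: $\mathit{implied}_j \leftrightarrow \psi^\Diamond_j(0)$ holds by construction (world $0$ is unchanged); $\mathit{implied}_j \rightarrow d_j$ holds because $d_j$ was forced true precisely where $\mathit{implied}_j$ was set; the weakened box clauses are vacuous where $\mathit{implied}_j$ holds and elsewhere reduce to the box clauses of $\he(\phi)$ (world $j$ and the $b$-atoms being unchanged there); and the diamond clauses hold because where $d_j$ was forced the world is a copy of world $0$ with $\psi^\Diamond_j(0)$ true, and elsewhere $d_j$ and world $j$ are as in $M$. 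Finally $M' \models \sigma$ since $M'$ differs from $M$ on $\sigma$'s atoms only by turning some $d_j$ from false to true, and $\sigma$ is monotone in the $d_j$ (they occur only positively).

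The genuinely delicate point, which I expect to be the main obstacle, is precisely the clause $\mathit{implied}_j \rightarrow d_j$ in the last direction: a model of $\he(\phi)$ may have $\psi^\Diamond_j(0)$ true while $d_j$ is false, so we are forced to flip $d_j$ to true. This flip is consistent with $\sigma$ only because $b_i$ and $d_j$ occur exclusively positively there (which is exactly the property that makes Plaisted--Greenbaum implications suffice), and it is consistent with the diamond clause $d_j \rightarrow \psi^\Diamond_j(j)$ only because we simultaneously overwrite world $j$ by world $0$, so that $\psi^\Diamond_j(j)$ inherits the (true) value of $\psi^\Diamond_j(0)$. Getting these two observations to interlock --- positivity of the fresh atoms in $\sigma$, and the overwrite preserving exactly the clauses that must be preserved --- is the crux; the remaining verifications are routine bookkeeping over the clause families listed above.
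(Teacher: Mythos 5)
Your proof is correct and follows essentially the same route as the paper's: in one direction you set $\mathit{implied}_j$ and $d_j$ according to $\psi^\Diamond_j(0)$, and in the other you overwrite world $j$ with a copy of world $0$. You are in fact somewhat more careful than the paper, which argues only on a local fragment of box clauses and leaves implicit exactly the two points you flag as the crux --- that flipping $d_j$ to true is harmless because $d_j$ occurs only positively in the skeleton, and that the diamond clause $d_j \rightarrow \psi^\Diamond_j(j)$ is rescued by simultaneously copying world $0$ into world $j$.
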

\begin{proof}
We show equi-satisfiablity of $\Gamma = \{b_i \rightarrow \psi^\Box_i(0),$ $b_i \rightarrow \psi^\Box_i(j)\}$ and $\Gamma' = \{b_i \rightarrow \psi^\Box_i(0),$ $b_i \wedge d_j \wedge \neg \mathit{implied}_j \rightarrow \psi^\Box_i(j),$ $\mathit{implied}_j \leftrightarrow \psi^\Diamond_j(0),$ $\mathit{implied}_j \rightarrow d_j\}$, from which the claim follows.
If $I \models \Gamma$, then $I' = (I \cap \mathit{atoms}(\Gamma)) \cup \{\mathit{implied}_j, d_j \mid I \models \psi^\Diamond_j(0)\}$ is such that $I' \models \Gamma'$.
As for the other direction, let $I \models \Gamma'$ and $I \models b_i \wedge (\neg d_j \vee \mathit{implied}_j)$ (otherwise $I' \models \Gamma'$ by construction).
From $b_i \rightarrow \psi^\Box_i(0)$ we have that $I \models \psi^\Box_i(0)$, and we can copy world $0$ into world $j$ to construct a model $I'$ for $\Gamma$:
$I' = (I \cap \mathit{atoms}(\Gamma)) \setminus \{p(j) \in I\} \cup \{p(j) \mid p(0) \in I\}$.
\end{proof}

The next observation is more involved and regards the possibility for a world associated with a $\Diamond$-literal to reuse the assignment provided by world $0$ to satisfy some $\Box$-literals.
Any interpretation $I$ satisfying $\mathit{full}(\phi)$ is such that, for all $i \in [1..m]$ and $j \in [1..n]$, $I \models b_i \rightarrow \psi^\Box_i(0)$ and $I \models b_i \wedge d_j \wedge \neg \mathit{implied}_j \rightarrow \psi^\Box_i(j)$.
Hence, for all $i \in [1..m]$ such that $I(b_i) = 1$, $I \models \psi^\Box_i(j)$ for $j \in [0..n]$ if $j$ is world $0$ or a relevant world (ie. $j \geq 1$ and $I \models d_j \wedge \neg \mathit{implied}_j$).
It turns out that world $0$ witnesses the possibility to jointly satisfy all true $\Box$-literals, and therefore the other worlds can focus on $\Box$-literals that may be reached by performing unit propagation from the associated $\Diamond$-literals (resulting into a reachability relation).
Other $\Box$-literals can be satisfied by reusing the assignment provided by world $0$.
Formally, for a set $L$ of literals
\begin{align}
    \label{eq:up}
    \UP(L) :={} & L \cup \bigcup\left\{\mathit{lits}(\psi^\Box_i) \setminus \{\overline{\ell}\} \mid \ell \in L,\ i \in [1..m], \overline{\ell} \in \mathit{lits}(\psi^\Box_i)\right\}\\
    \label{eq:bj}
    B_j :={} & \left\{i \in [1..m] \mid \overline{\UP \Uparrow \mathit{lits}(\psi^\Diamond_j)} \cap \mathit{lits}(\psi^\Box_i) \neq \emptyset\right\}
\end{align}
Intuitively, $\UP(L)$ is the set of literals that may be used to satisfy every $\psi^\Box_i$ affected by the assignment of $L$, $\UP \Uparrow \mathit{lits}(\psi^\Diamond_j)$ is the set of literals reached in this way from the literals in $\psi^\Diamond_j$, and $B_j$ represents the set of $\Box$-literals involved in this computation.
Let $\mathit{reach}(\phi)$ be obtained from $\mathit{full}(\phi)$ by removing clauses encoding $b_i \wedge d_j \wedge \neg \mathit{implied}_j \rightarrow \psi^\Box_i(j)$, for all $i \in [1..m]$ and $j \in [1..n]$ such that $i \notin B_j$.

\begin{example}
Let $\phi$ be $\Box(p \vee q) \wedge \Diamond p \wedge \Diamond\neg p$.
Clauses in $\mathit{full}(\phi)$ encode the following formulas:
\begin{equation*}
\begin{array}{rrrr}
b_1 \wedge d_1 \wedge d_2 \qquad
b_1 \rightarrow p(0) \vee q(0) &
d_1 \rightarrow p(1) &
d_2 \rightarrow \neg p(2) \\
b_1 \wedge d_1 \wedge \neg \mathit{implied}_1 \rightarrow p(1) \vee q(1) &
\mathit{implied}_1 \leftrightarrow p(0) &
\mathit{implied}_1 \rightarrow d_1 \\
b_1 \wedge d_2 \wedge \neg \mathit{implied}_2 \rightarrow p(2) \vee q(2) &
\mathit{implied}_2 \leftrightarrow \neg p(0) &
\mathit{implied}_2 \rightarrow d_2 \\
\end{array}
\end{equation*}
In order to construct $\mathit{reach}(\phi)$, we have to compute sets $B_1$ and $B_2$;
indeed, $\phi$ contains two $\Diamond$-literals, namely $\Diamond p$ and $\Diamond\neg p$.
Let us first determine the sets of reached literals from $\{p\}$ and $\{\neg p\}$ according to \eqref{eq:up}:
$\UP(\{p\}) = \{p\}$ --- note that $\overline{p} \notin \mathit{lits}(p \vee q)$;
$\UP(\{\neg p\}) = \{\neg p, q\}$ --- note that $\overline{\neg p} \in \mathit{lits}(p \vee q)$ and therefore literals in $\mathit{lits}(p \vee q) \setminus \{\overline{\neg p}\} = \{q\}$ are added to $\UP(\{\neg p\})$;
$\UP(\{\neg p, q\}) = \{\neg p, q\}$ --- note that $\overline{q} \notin \mathit{lits}(p \vee q)$ and therefore no other literal is added to $\UP(\{\neg p, q\})$.
Hence, we have that $\UP \Uparrow \mathit{lits}(\{p\}) = \{p\}$ and $\UP \Uparrow \mathit{lits}(\{\neg p\}) = \{\neg p, q\}$.
Now using \eqref{eq:bj}, $B_1 = \emptyset$ and $B_2 = \{1\}$, that is, $\Diamond p$ does not interact with the $\Box$-literal, while $\Diamond \neg p$ interacts with the $\Box$-literal.
Accordingly, $\mathit{reach}(\phi)$ is obtained from $\mathit{full}(\phi)$ by removing clauses encoding $b_1 \wedge d_1 \wedge \neg \mathit{implied}_1 \rightarrow p(1) \vee q(1)$.
In fact, such a formula can be satisfied by assigning to $q(1)$ the same truth value of $q(0)$.
For example, if $I$ is a model of $\mathit{reach}(\phi)$ such that $I(q(0)) = 1$, then $I \cup \{q(1)\}$ is a model of $\mathit{full}(\phi)$;
similarly, if $I$ is a model of $\mathit{reach}(\phi)$ such that $I(q(0)) = 0$, then $I \setminus \{q(1)\}$ is a model of $\mathit{full}(\phi)$.
\hfill$\blacksquare$
\end{example}

\begin{theorem}
For every S5-NF formula $\phi$, $\mathit{reach}(\phi)$ is equi-satisfiable to $\mathit{full}(\phi)$.
\end{theorem}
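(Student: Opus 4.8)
The plan is to prove the two directions of equi-satisfiability separately. One direction is immediate: $\mathit{reach}(\phi)$ is obtained from $\mathit{full}(\phi)$ by deleting clauses, so every model of $\mathit{full}(\phi)$ is already a model of $\mathit{reach}(\phi)$. For the converse I would take a model $I$ of $\mathit{reach}(\phi)$ and build a model $I'$ of $\mathit{full}(\phi)$, keeping $I$ unchanged on the fresh atoms $b_i, d_j, \mathit{implied}_j$, on the world-$0$ atoms, and on the atoms of every world $j$ that is not \emph{relevant} (i.e.\ $I \not\models d_j \wedge \neg\mathit{implied}_j$), since for such worlds the deleted clauses $b_i \wedge d_j \wedge \neg\mathit{implied}_j \rightarrow \psi^\Box_i(j)$ are vacuous. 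Only the relevant worlds need to be rebuilt.

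For a relevant world $j$, let $R_j = \UP \Uparrow \mathit{lits}(\psi^\Diamond_j)$ and let $C_j$ be the set of atoms $q$ with $q \in R_j$ and $\neg q \in R_j$ (those on which $R_j$ is internally inconsistent). I would define the new world-$j$ assignment $W_j$ by: use $I$'s world-$j$ value on atoms of $C_j$; on the remaining atoms occurring in $R_j$, make the (unique) literal of $R_j$ on that atom true; and on all other atoms, copy world $0$. The guiding idea is that world $0$ already satisfies every active $\Box$-literal, thanks to the retained clauses $b_i \rightarrow \psi^\Box_i(0)$, so a relevant world can safely reuse it outside the atoms of $R_j$, and inside it suffices to make the ``reachable'' literals true, reverting to $I$ only where this is impossible.

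The verification proceeds in three steps. (i) $W_j \models \psi^\Diamond_j$: its literals lie in $R_j$, those on non-conflicting atoms are made true by construction, and those on conflicting atoms keep the value $I$ gave them, which is true since $I \models d_j \rightarrow \psi^\Diamond_j(j)$. (ii) For an active $\Box$-literal $\psi^\Box_i$ with $i \notin B_j$: by definition of $B_j$ no literal of $\psi^\Box_i$ has its complement in $R_j$, so every literal of $\psi^\Box_i$ whose atom occurs in $R_j$ actually occurs \emph{positively} in $R_j$ and is made true; and if no literal of $\psi^\Box_i$ has its atom in $R_j$, then $W_j$ agrees with world $0$ on all of $\psi^\Box_i$ and inherits satisfaction. (iii) For an active $\Box$-literal $\psi^\Box_i$ with $i \in B_j$: the clause $b_i \wedge d_j \wedge \neg\mathit{implied}_j \rightarrow \psi^\Box_i(j)$ is retained in $\mathit{reach}(\phi)$, so $I \models \psi^\Box_i(j)$; using the fixpoint property of $\UP$ (if $\ell \in R_j$ and $\overline\ell$ occurs in $\psi^\Box_i$, then all other literals of $\psi^\Box_i$ lie in $R_j$), I would show that either all literals of $\psi^\Box_i$ sit on atoms of $C_j$ — so $W_j$ coincides with $I$ on them and satisfaction carries over — or $\psi^\Box_i$ contains a literal of $R_j$ on an atom outside $C_j$, which $W_j$ makes true. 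Assembling (i)--(iii) with the unchanged clauses on world $0$ and the irrelevant worlds yields $I' \models \mathit{full}(\phi)$.

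I expect step (iii) to be the main obstacle: the case of an active $\psi^\Box_i$ with $i \in B_j$ when $R_j$ is inconsistent. The crux is the dichotomy ``all literals of $\psi^\Box_i$ are on conflicting atoms, or one reachable literal of $\psi^\Box_i$ is on a non-conflicting atom'', whose proof applies the $\UP$-closure twice — once through the literal that put $i$ into $B_j$, once through the complement of a literal on a conflicting atom — and relies on $\UP \Uparrow$ being a genuine least fixpoint, which is guaranteed by finiteness of the literal set. A minor point to dispose of separately is that a singleton $\Box$-literal $\Box\ell$ with $\ell \in R_j$ would demand $\overline\ell$ at every world but $\ell$ at world $j$, so no model of $\mathit{reach}(\phi)$ can make such a $j$ relevant.
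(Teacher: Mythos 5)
Your first direction and the overall idea for the converse (let the worlds of $\Diamond$-literals reuse world $0$ outside the reached literals) are in the spirit of the paper, but your construction is more aggressive than the paper's: you rebuild \emph{every} relevant world $j$ wholesale from $R_j = \UP \Uparrow \mathit{lits}(\psi^\Diamond_j)$, whereas the paper only performs a local repair, copying world $0$ onto $\mathit{atoms}(\psi^\Box_i)\setminus\mathit{atoms}(\psi^\Diamond_j)$ for a clause that was actually deleted and is actually violated, and iterating. The global rebuild is where the gap lies, and it is exactly the case you flagged as a ``minor point'': an active singleton $\Box$-literal $\Box\ell_0$ with $i \in B_j$, $\overline{\ell_0} \in R_j$ and $\ell_0 \notin R_j$. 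Your claim that no model of $\mathit{reach}(\phi)$ can make such a $j$ relevant is false. Take $\phi = \Box p \wedge (\Box(q \vee \neg p) \vee s) \wedge \Diamond\neg q$. Here $R_1 = \UP\Uparrow\{\neg q\} = \{\neg q, \neg p\}$ (the literal $\neg p$ enters via unit propagation through $\Box(q\vee\neg p)$), so $\neg p \in R_1$, $p \notin R_1$, and $C_1 = \emptyset$. The interpretation with $b_1, d_1, s, p(0), q(0), p(1)$ true and $b_2, \mathit{implied}_1, q(1)$ false satisfies $\mathit{reach}(\phi)$ (which here even coincides with $\mathit{full}(\phi)$, since $B_1 = \{1,2\}$ and nothing is deleted), and world $1$ is relevant. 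Your $W_1$ forces the unique $R_1$-literal $\neg p$ on the non-conflicting atom $p$ to be true, i.e.\ sets $p(1)$ false, violating the retained clause $b_1 \wedge d_1 \wedge \neg\mathit{implied}_1 \rightarrow p(1)$. So the construction turns a model of $\mathit{full}(\phi)$ into a non-model. (The rest of your case analysis is sound: for $|\mathit{lits}(\psi^\Box_i)| \geq 2$ the double application of the $\UP$-closure does give the dichotomy you state, and cases (i) and (ii) go through.)

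The underlying problem is that membership of $\overline{\ell_0}$ in $R_j$ is a purely syntactic over-approximation: $\overline{\ell_0}$ may have been propagated through a $\Box$-literal that is \emph{inactive} in the model at hand (here $\Box(q\vee\neg p)$, switched off by $s$), so there is no semantic reason to make $\overline{\ell_0}$ true at world $j$, and doing so can clash with a retained clause that forces $\ell_0(j)$. The paper's proof avoids this because it never overwrites world $j$ on the basis of $R_j$; it only copies world-$0$ values onto the atoms of a violated \emph{deleted} clause, and only outside $\mathit{atoms}(\psi^\Diamond_j)$, so values forced by $d_j \rightarrow \psi^\Diamond_j(j)$ are never touched and the repaired clause becomes true because it already holds at world $0$. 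To salvage your argument you would need either to restrict the rebuild to worlds and boxes for which a deleted clause is actually violated (essentially reverting to the paper's iterative repair), or to refine the treatment of non-conflicting atoms of $R_j$ so that literals forced by retained active singleton (more generally, unit-propagating) $\Box$-literals are never overridden.
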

\begin{proof}
$I \models \mathit{full}(\phi)$ implies $I \models \mathit{reach}(\phi)$ because $\mathit{reach}(\phi) \subseteq \mathit{full}(\phi)$.
As for the other direction, let $I \models \mathit{reach}(\phi)$ be such that $I \models b_i \wedge d_j \wedge \neg\mathit{implied}_j \wedge \neg\psi^\Box_i(j)$ for some $i \in [1..m]$ and $j \in [1..n]$ such that $i \notin B_j$ (otherwise $I \models \mathit{full}(\phi)$).
Since $b_i \rightarrow \psi^\Box_i(0)$ belongs to $\mathit{reach}(\phi)$, we have that $I \models \psi^\Box_i(0)$, and we can copy a portion of world $0$ into world $j$ to construct a model $I'$ for $\mathit{reach}(\phi)$ such that $I' \models \psi^\Box_i(j)$:
\begin{itemize}[leftmargin=3em,labelsep=1em]
\item
$L = \mathit{atoms}(\psi^\Box_i(j)) \setminus \mathit{atoms}(\psi^\Diamond_j(j))$;
\item
$I' = I \setminus \{p(j) \in L\} \cup \{p(j) \in L \mid p(0) \in I\}$.
\end{itemize}
By reiterating the process, we end up with a model of $\mathit{full}(\phi)$.
\end{proof}

Other observations pertain to entailment relations between $\Box$- and $\Diamond$-literals:
\begin{enumerate}
    \item $\psi^\Box_i$ and $\psi^\Diamond_j$ cannot be jointly satisfied if $\overline{\mathit{lits}(\psi^\Box_i)} \subseteq \mathit{lits}(\psi^\Diamond_j)$;
    \item $\psi^\Box_i$ entails $\psi^\Box_j$ if $\mathit{lits}(\psi^\Box_i) \subseteq \mathit{lits}(\psi^\Box_j)$;
    \item $\psi^\Diamond_i$ entails $\psi^\Diamond_j$ if $\mathit{lits}(\psi^\Diamond_i) \supseteq \mathit{lits}(\psi^\Diamond_j)$.
\end{enumerate}

\begin{example}
The following are small examples of the above entailment relations:
\begin{enumerate}[leftmargin=3em,labelsep=1em]
\item
Formula $\Box(\neg p \vee \neg q) \wedge \Diamond(p \wedge q \wedge s)$ is unsatisfiable;
note that $\overline{\{\neg p, \neg q\}} \subseteq \{p, q, s\}$.

\item
Whenever $(\mathbf{I},0) \models \Box(p \vee q)$, also $(\mathbf{I},0) \models \Box(p \vee q \vee s)$ holds;
note that $\{p, q\} \subseteq \{p, q, s\}$.

\item
Similarly, $(\mathbf{I},0) \models \Diamond(p \wedge q \wedge s)$ implies $(\mathbf{I},0) \models \Diamond(p \wedge q)$;
note that $\{p, q, s\} \supseteq \{p, q\}$.
\end{enumerate}
The provided encodings can be enriched to represent such entailment relations.
\hfill$\blacksquare$
\end{example}

For $\Gamma \in \{\mathit{full}(\phi), \mathit{reach}(\phi)\}$, let $\mathit{conflicts}(\Gamma,\phi)$ be the set of clauses obtained from $\Gamma$ by adding clauses of the form $\neg b_i \vee \neg d_j$ for all $i \in [1..m]$ and $j \in [1..n]$ such that $\overline{\mathit{lits}(\psi^\Box_i)} \subseteq \mathit{lits}(\psi^\Diamond_j)$;
let $\mathit{boxes}(\Gamma,\phi)$ be the propositional formula obtained from $\Gamma$ by adding clauses of the form $\neg b_i \vee b_j$ for all $i \in [1..m]$ and $j \in [1..m]$ such that $\mathit{lits}(\psi^\Box_i) \subseteq \mathit{lits}(\psi^\Box_j)$;
let $\mathit{diamonds}(\Gamma,\phi)$ be the propositional formula obtained from $\Gamma$ by replacing the clauses encoding $\mathit{implied}_j \leftrightarrow \psi^\Diamond_j(0)$ by clauses encoding $\mathit{implied}_j \leftrightarrow \psi^\Diamond_j(0) \vee d_{i_1} \vee \cdots \vee d_{i_k}$ for all $j \in [1..n]$, where $\{i_1, \ldots, i_k\} = \{i \mid \mathit{lits}(\Gamma^\Diamond_i) \supseteq \mathit{lits}(\psi^\Diamond_j)\}$.

\begin{theorem}
For every S5-NF formula $\phi$, and $\Gamma \in \{\mathit{full}(\phi), \mathit{reach}(\phi)\}$, the following sets of clauses are equi-satisfiable:
$\Gamma$, $\mathit{conflicts}(\Gamma,\phi)$, $\mathit{boxes}(\Gamma,\phi)$, and $\mathit{diamonds}(\Gamma,\phi)$.
\end{theorem}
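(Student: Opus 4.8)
The plan is to prove that each of $\mathit{conflicts}(\Gamma,\phi)$, $\mathit{boxes}(\Gamma,\phi)$ and $\mathit{diamonds}(\Gamma,\phi)$ is equi-satisfiable with $\Gamma$; since equi-satisfiability is an equivalence relation, the four-way statement follows. Everything rests on three elementary facts about the literal sets involved, which I would record first: (i) if $\overline{\mathit{lits}(\psi^\Box_i)} \subseteq \mathit{lits}(\psi^\Diamond_j)$, then no interpretation can satisfy both $\psi^\Box_i(w)$ and $\psi^\Diamond_j(w)$, because satisfying the cube $\psi^\Diamond_j(w)$ falsifies every literal of the clause $\psi^\Box_i(w)$; (ii) if $\mathit{lits}(\psi^\Box_i) \subseteq \mathit{lits}(\psi^\Box_j)$, then $\psi^\Box_i(w) \models \psi^\Box_j(w)$; (iii) if $\mathit{lits}(\psi^\Diamond_i) \supseteq \mathit{lits}(\psi^\Diamond_j)$, then $\psi^\Diamond_i(w) \models \psi^\Diamond_j(w)$. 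I would also note that the hypothesis of (i) forces $i \in B_j$, since $\mathit{lits}(\psi^\Box_i)$ is non-empty and included in $\overline{\mathit{lits}(\psi^\Diamond_j)} \subseteq \overline{\UP \Uparrow \mathit{lits}(\psi^\Diamond_j)}$; hence the clause encoding $b_i \wedge d_j \wedge \neg\mathit{implied}_j \to \psi^\Box_i(j)$ is present in both $\mathit{full}(\phi)$ and $\mathit{reach}(\phi)$, which is what allows a single argument to cover both choices of $\Gamma$ in the $\mathit{conflicts}$ case.

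For $\mathit{conflicts}(\Gamma,\phi)$, the inclusion $\Gamma \subseteq \mathit{conflicts}(\Gamma,\phi)$ handles one direction, and for the converse I would show that each added clause $\neg b_i \vee \neg d_j$ is a logical consequence of $\Gamma$: in a model of $\Gamma$ with $b_i$ and $d_j$ true, either $\mathit{implied}_j$ is true, so that $\psi^\Diamond_j(0)$ and $\psi^\Box_i(0)$ both hold (the latter via $b_i \to \psi^\Box_i(0)$), contradicting (i) at world $0$; or $\mathit{implied}_j$ is false, so that $d_j \to \psi^\Diamond_j(j)$ and the present clause $b_i \wedge d_j \wedge \neg\mathit{implied}_j \to \psi^\Box_i(j)$ force $\psi^\Diamond_j(j)$ and $\psi^\Box_i(j)$, contradicting (i) at world $j$. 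For $\mathit{boxes}(\Gamma,\phi)$, again $\Gamma \subseteq \mathit{boxes}(\Gamma,\phi)$; for the other direction I would first take $\Gamma = \mathit{full}(\phi)$ and, from a model $I$, build $I'$ by additionally setting $b_j$ true whenever $I(b_i) = 1$ for some $i$ with $\mathit{lits}(\psi^\Box_i) \subseteq \mathit{lits}(\psi^\Box_j)$. Transitivity of $\subseteq$ makes $I'$ satisfy the new clauses, and $I' \models \mathit{full}(\phi)$ because the only clauses in which a newly-set $b_j$ occurs negatively are those encoding $b_j \to \psi^\Box_j(0)$ and $b_j \wedge d_k \wedge \neg\mathit{implied}_k \to \psi^\Box_j(k)$, whose consequents now hold by (ii) from the witnessing $b_i$ (which remains true), while clauses in which $b_j$ occurs positively only become easier to satisfy. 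The case $\Gamma = \mathit{reach}(\phi)$ then reduces to this one through the preceding theorem: satisfiability of $\mathit{reach}(\phi)$ yields satisfiability of $\mathit{full}(\phi)$, hence a model of $\mathit{boxes}(\mathit{full}(\phi),\phi)$, which also models $\mathit{reach}(\phi)$ together with the added clauses, i.e.\ $\mathit{boxes}(\mathit{reach}(\phi),\phi)$.

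I expect $\mathit{diamonds}(\Gamma,\phi)$ to be the main obstacle, since clauses are \emph{replaced} rather than added, so neither inclusion is available and both directions require explicit model transformations; I would also read the added disjuncts $d_{i_1},\dots,d_{i_k}$ as indexing the $\Diamond$-literals \emph{strictly} larger than $\psi^\Diamond_j$ under literal-set inclusion, so that this relation is a well-founded strict order on the finitely many $\Diamond$-literals (reading $\psi^\Diamond_j$ itself, or a $\Diamond$-literal with the same literals, as an admissible disjunct would make $\mathit{implied}_j$ true for free and break equi-satisfiability, as $\Box p \wedge \Diamond\neg p$ already shows). Both directions then hinge on the same construction: for each index $j$ whose bi-implication disagrees between $\Gamma$ and $\mathit{diamonds}(\Gamma,\phi)$, pick a $\Diamond$-literal $\psi^\Diamond_{i_j}$ that is maximal, with respect to strict literal-set inclusion, among those strictly larger than $\psi^\Diamond_j$ with $d_{i_j}$ true; maximality guarantees that $i_j$ is itself not a disagreeing index, and --- in the direction from $\mathit{diamonds}(\Gamma,\phi)$ to $\Gamma$ --- that $\mathit{implied}_{i_j}$ is false, hence world $i_j$ satisfies $\psi^\Box_k(i_j)$ for every $\Box$-literal whose indicator $b_k$ is true. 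One then overwrites world $j$ by a copy of world $i_j$ and flips $\mathit{implied}_j$ accordingly (together with $d_j$, in the direction from $\Gamma$ to $\mathit{diamonds}(\Gamma,\phi)$). By (iii) this keeps $\psi^\Diamond_j(j)$ true, the box clauses at world $j$ are either vacuous (when $\mathit{implied}_j$ is set true) or satisfied by $i_j$'s assignment (when $\mathit{implied}_j$ is set false), and no other clause is affected, because atoms of worlds $j \geq 1$ occur nowhere else and the $d$-atoms are only switched from false to true. The remaining subtlety is $\Gamma = \mathit{reach}(\phi)$ in the direction from $\mathit{diamonds}(\mathit{reach}(\phi),\phi)$ to $\mathit{reach}(\phi)$, where a model need not satisfy $\psi^\Box_k(i_j)$ outright; there I would first lift it to a model of $\mathit{diamonds}(\mathit{full}(\phi),\phi)$ by exactly the "copy a part of world $0$" construction from the proof of the preceding theorem --- which leaves the replaced bi-implications untouched, as they mention only world-$0$ atoms and $d$-atoms --- and then apply the $\mathit{full}$ argument. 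The most delicate part, I expect, will be checking that these overwriting steps preserve every clause they do not explicitly target, and that no cascade of newly forced $\mathit{implied}_j$'s arises beyond the ones already handled (which follows because the disagreeing set is already closed under the strict-inclusion order).
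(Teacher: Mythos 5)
Your proposal is correct, and it does considerably more work than the paper, whose entire proof is the single sentence ``Immediate from the entailment relations.'' The comparison is instructive: you show that only $\mathit{conflicts}(\Gamma,\phi)$ is genuinely immediate, because the added clauses $\neg b_i \vee \neg d_j$ are logical consequences of $\Gamma$ (and your check that the hypothesis forces $i \in B_j$, so that the needed clause survives in $\mathit{reach}(\phi)$, is exactly the detail the paper's one-liner skips). For $\mathit{boxes}$ and $\mathit{diamonds}$ the entailment relations alone do not suffice, since the encoding uses only the implications $b_i \rightarrow \psi^\Box_i(\cdot)$ rather than equivalences, and the $\mathit{implied}_j$ definitions are \emph{replaced} rather than extended; your explicit model transformations (setting subsumed $b_j$'s true, and overwriting world $j$ with a maximal witnessing world $i_j$) are what an actual proof requires, and your detour through $\mathit{full}(\phi)$ to handle $\Gamma = \mathit{reach}(\phi)$ correctly sidesteps the fact that $i \in B_k$ need not follow from $j \in B_k$ under $\mathit{lits}(\psi^\Box_i) \subseteq \mathit{lits}(\psi^\Box_j)$. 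Most importantly, your observation about $\mathit{diamonds}$ is a genuine correction: as literally written, the index set $\{i \mid \mathit{lits}(\psi^\Diamond_i) \supseteq \mathit{lits}(\psi^\Diamond_j)\}$ contains $j$ itself, which makes $\mathit{implied}_j$ follow from $d_j$, vacates every box constraint at world $j$, and renders the unsatisfiable $\Box p \wedge \Diamond\neg p$ satisfiable --- so the theorem holds only under the strict-inclusion reading you adopt (which also excludes distinct $\Diamond$-literals with identical literal sets). The remaining checks you defer (that the world-overwriting steps preserve untargeted clauses, and that the set of disagreeing indices is closed under the cascade) go through as you sketch them, since atoms of world $j \geq 1$ occur only in the clauses indexed by $j$ and the disagreeing set is downward closed along the strict order.
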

\begin{proof}
Immediate from the entailment relations.
\end{proof}

\section{Modal Logic Encoding in Answer Set Programming}\label{sec:implementation}

This section presents an ASP implementation of the propositional theories introduced in the previous section.
Common to all theories is the relational encoding of the S5-NF formula $\phi$ to be processed.
Specifically, $\phi$ is encoded by the following facts:
\begin{itemize}
    \item \lstinline|atom(p)|, for every propositional atom $p$ occurring in $\phi$;
    \item \lstinline|box(b)|, \lstinline|pos_box(b,p$_i$)|, and \lstinline|neg_box(b,p$_j$)|, for every $\Box$-literal of $\phi$ of the form \linebreak $\Box(p_1 \vee \cdots \vee p_m \vee \neg p_{m+1} \vee \cdots \vee \neg p_n)$, with $n \geq 1$ and $n \geq m \geq 0$, and all $i \in [1..m]$ and $j \in [m+1..n]$, where \lstinline|b| is an identifier for the $\Box$-literal;
    \item \lstinline|diamond(d)|, \lstinline|pos_diamond(d,p$_i$)|, and \lstinline|neg_diamond(d,p$_j$)|, for every $\Diamond$-literal of $\phi$ of the form $\Diamond(p_1 \wedge \cdots \wedge p_m \wedge \neg p_{m+1} \wedge \cdots \wedge \neg p_n)$, with $n \geq 1$ and $n \geq m \geq 0$, and all $i \in [1..m]$ and $j \in [m+1..n]$, where \lstinline|d| is an identifier for the $\Diamond$-literal;
    \item \lstinline|clause(c)|, \lstinline|pos_clause(c,lit$_i$)|, and \lstinline|neg_clause(c,p$_j$)|, for every S5-clause of $\phi$ of the form $\ell_1 \vee \cdots \vee \ell_m \vee \neg p_{m+1} \vee \cdots \vee \neg p_n$, with $n \geq 1$ and $n \geq m \geq 0$, and all $i \in [1..m]$ and $j \in [m+1..n]$, where \lstinline|c| is an identifier for the S5-clause and each \lstinline|lit$_i$| is the identifier of the associated S5-literal $\ell_i$.
\end{itemize}
Let $\Pi_\mathit{re}(\phi)$ denote the \emph{relational encoding} of $\phi$, that is, the above facts.

\begin{example}
Let $\phi$ be $p \wedge \Box(p \vee q) \wedge (\Diamond(p \wedge q) \vee \Diamond(\neg p \wedge \neg q))$. $\Pi_\mathit{re}(\phi)$ contains the following facts:
\begin{asp}
   atom(p).          clause(c1).         clause(c2).         clause(c3).
   atom(q).      pos_clause(c1,p).   pos_clause(c2,b1).  pos_clause(c3,d1).
    box(b1).        diamond(d1).        diamond(d2).     pos_clause(c3,d2).
pos_box(b1,p).  pos_diamond(d1,p).  neg_diamond(d2,p).
pos_box(b1,q).  pos_diamond(d1,q).  neg_diamond(d2,q).
\end{asp}
\hfill$\blacksquare$
\end{example}

The basic encoding materialises a full copy of the propositional atoms in all worlds and introduces a world for every $\Diamond$-literal in the input S5-NF formula $\phi$.
Let $\Pi_\mathit{full}$ be the ASP program comprising the following rules:
\begin{asp}
$r_{0\z}:\quad$ world(D,D) :- diamond(D).
$r_{1\z}:\quad$ {true(X)} :- box(X).
$r_{2\z}:\quad$ {true(X)} :- diamond(X).
$r_{3\z}:\quad$ {true(X)} :- atom(X).
$r_{4\z}:\quad$ {true(X,W)} :- world(W,_), atom(X).
$r_{5\z}:\quad$ :- clause(C); not true(X) : pos_clause(C,X); true(X) : neg_clause(C,X).
$r_{6\z}:\quad$ :- box(B), true(B); not true(X) : pos_box(B,X); true(X) : neg_box(B,X).
$r_{7\z}:\quad$ :- world(W,D); box(B), true(B), diamond(D), true(D), not implied(D);
$\phantom{r_{7\z}:}\quad$    not true(X,W) : pos_box(B,X); true(X,W) : neg_box(B,X).
$r_{8\z}:\quad$ implied(D) :- diamond(D);     true(X) : pos_diamond(D,X);
$\phantom{r_{7\z}:}\quad$                           not true(X) : neg_diamond(D,X).
$r_{9\z}:\quad$ :- diamond(D), implied(D), not true(D).
$r_{10}:\quad$ :- pos_diamond(D,X); true(D), not implied(D); world(W,D), not true(X,W).
$r_{11}:\quad$ :- neg_diamond(D,X); true(D), not implied(D); world(W,D),     true(X,W).
$r_{12}:\quad$ need(W) :- world(W,D), true(D), not implied(D).
$r_{13}:\quad$ :- atom(X), world(W,_), not need(W), true(X,W).
\end{asp}
Rule $r_0$ asserts that every $\Diamond$-literal is associated with its own world, and is possibly replaced in other encodings to let some $\Diamond$-literals share the same world.
Rules $r_2$ and $r_3$ define the search space for $\Box$-literals and $\Diamond$-literals, that is, each of them can be assumed either true or false.
Rule $r_3$ defines the search space for world 0, that is, every propositional atom can be either true or false; this assumes that the set of worlds is non-empty, in line with the assumption made for Kripke structures.
similarly, rule $r_4$ defines the search space for other worlds.
Rule $r_5$ imposes that all S5-clauses of $\phi$ are satisfied, and rule $r_6$ requires that every $\Box$-literal $\Box\psi$ assumed to be true is such that $\psi$ is true in world 0;
similarly, rule $r_7$ requires that $\psi$ is true in all worlds associated with $\Diamond$-literals assumed to be true (and not implied).
In fact, rule $r_8$ defines a $\Diamond$-literal as implied if it is true in world 0, and rule $r_9$ additionally enforces its truth to reduce the search space.
Rules $r_{10}$ and $r_{11}$ enforce truth of every $\Diamond$-literal (assumed to be true and not implied) in the associated world, and rule $r_{12}$ defines such worlds as needed;
indeed, rule $r_{13}$ enforces falsity of all propositional atoms in worlds that are not needed, again to reduce the search space.

$\Pi_\mathit{full} \cup \Pi_\mathit{re}(\phi)$ is an ASP implementation of $\mathit{full}(\phi)$ and has some stable model if and only if there is a set $\mathbf{I} = \{I_0, \ldots, I_n\}$ such that $(\mathbf{I},0) \models \phi$.
However, it materialises several propositional atoms that can be avoided.
So, a second encoding can be so designed as to limit propositional atoms in every world by those reachable from the associated diamonds.
Let $\Pi_\mathit{reach}$ be the ASP program obtained from $\Pi_\mathit{full}$ by removing rules $r_4$ and $r_7$, and by adding the following rules:
\begin{asp}
$r_{14}:\quad$ {true(Y,W)} :- world(W,D), pos_diamond(D,X), lrl(X,p,Y,_).
$r_{15}:\quad$ {true(Y,W)} :- world(W,D), neg_diamond(D,X), lrl(X,n,Y,_).
$r_{16}:\quad$ lrl(X,pos,X,pos) :- atom(X), pos_diamond(_,X).
$r_{17}:\quad$ lrl(X,neg,X,neg) :- atom(X), neg_diamond(_,X).
$r_{18}:\quad$ lrl(X,PX,Z,pos) :- lrl(X,PX,Y,neg); pos_box(B,Y); pos_box(B,Z), Z!=Y.
$r_{19}:\quad$ lrl(X,PX,Z,neg) :- lrl(X,PX,Y,neg); pos_box(B,Y); neg_box(B,Z).
$r_{20}:\quad$ lrl(X,PX,Z,pos) :- lrl(X,PX,Y,pos); neg_box(B,Y); pos_box(B,Z).
$r_{21}:\quad$ lrl(X,PX,Z,neg) :- lrl(X,PX,Y,pos); neg_box(B,Y); neg_box(B,Z), Z!=Y.
$r_{22}:\quad$ lrb(X,P,B) :- lrl(X,P,Y,neg); pos_box(B,Y). 
$r_{23}:\quad$ lrb(X,P,B) :- lrl(X,P,Y,pos); neg_box(B,Y).
$r_{24}:\quad$ reach_box(W,B) :- world(W,D), pos_diamond(D,X); lrb(X,pos,B). 
$r_{25}:\quad$ reach_box(W,B) :- world(W,D), neg_diamond(D,X); lrb(X,neg,B).
$r_{26}:\quad$ :- world(W,D), diamond(D), true(D), not implied(D); reach_box(W,B);
$\phantom{r_{26}:}\quad$    true(B); not true(X,W) : pos_box(B,X); true(X,W) : neg_box(B,X).
\end{asp}
Above, \lstinline|lrl| stands for \emph{literal reaches literal}, and \lstinline|lrb| stands for \emph{literal reaches box}.
Rules $r_{16}$--$r_{21}$ define the \emph{reach} relation introduced in Section~\ref{sec:relations} for literals occurring in some $\Diamond$-literal of $\phi$ --- essentially, set $\UP(L)$ in (\ref{eq:up}).
Rules $r_{22}$--$r_{23}$ detect for each propositional literal $\ell$ the $\Box$-literals that contain a literal reached by $\ell$, and rules $r_{24}$--$r_{25}$ computes for every world the $\Box$-literals reached by the associated $\Diamond$-literals --- essentially, sets $B_j$ in (\ref{eq:bj}).
Within such relations, the search space of every world is restricted to the reached propositional literals (rules $r_{14}$--$r_{15}$), and $\Box$-literals are enforced only if actually reached (rule $r_{26}$).
$\Pi_\mathit{reach} \cup \Pi_\mathit{re}(\phi)$ is an ASP implementation of $\mathit{reach}(\phi)$.

Let $\Pi_\mathit{conflicts}(\phi)$ extend $\Pi_\mathit{reach}$ with the rule
\begin{asp}
$r_{27}:\quad$ :- box_diamond_conflict(B,D); true(B), true(D).
\end{asp}
and the following facts:
\lstinline|box_diamond_conflict(b,d')| for every $\Box$-literal $\psi$ and $\Diamond$-literal $\psi'$ occurring in $\phi$ and such that $\overline{\mathit{lits}(\psi)} \subseteq \mathit{lits}(\psi')$, where \lstinline|b| and \lstinline|d'| are the identifiers of $\psi$ and $\psi'$.
$\Pi_\mathit{conflicts}(\phi) \cup \Pi_\mathit{re}(\phi)$ is an ASP implementation of $\mathit{conflicts}(\mathit{reach}(\phi), \phi)$.

Let $\Pi_\mathit{boxes}(\phi)$ extend $\Pi_\mathit{reach}$ with the rule
\begin{asp}
$r_{28}:\quad$ :- box_subset(B,B'), true(B), not true(B').
\end{asp}
and the following facts:
\lstinline|box_subset(b,b')| for all $\Box$-literals $\psi$, $\psi'$ occurring in $\phi$ and such that $\mathit{lits}(\psi) \subseteq \mathit{lits}(\psi')$, where \lstinline|b| and \lstinline|b'| are the identifiers of $\psi$ and $\psi'$.
$\Pi_\mathit{boxes}(\phi) \cup \Pi_\mathit{re}(\phi)$ is an ASP implementation of $\mathit{boxes}(\mathit{reach}(\phi), \phi)$.

\begin{algorithm}[t]
    \caption{ComputeWorlds($\psi$: S5-NF formula)}\label{alg:worlds}

    $D :={}$ list of $\Diamond$-literals occurring in $\phi$, sorted by decreasing size\;
    $W := \emptyset$\;
    \ForEach{$\psi$ in $D$}{
        \If{there is $w \in W$ such that $\mathit{lits}(\psi) \subseteq \mathit{lits}(\psi')$ for all $\psi' \in w$}{
            $w := w \cup \{\psi\}$\tcp*{add $\psi$ to world $w$}
        }
        \Else{
            $W := W \cup \{\{\psi\}\}$\tcp*{add a new world for $\psi$}
        }
    }
    \Return{W}\;
\end{algorithm}

Let $\Pi_\mathit{diamonds}(\phi)$ extend $\Pi_\mathit{reach} \setminus \{r_0\}$ with the rule
\begin{asp}
$r_{29}:\quad$ implied(D) :- diamond_subset(D,D'), true(D').
\end{asp}
and the following facts:
\lstinline|diamond_subset(d,d')| for all $\Diamond$-literals $\psi$, $\psi'$ occurring in $\phi$ and such that $\mathit{lits}(\psi) \subseteq \mathit{lits}(\psi')$, where \lstinline|d| and \lstinline|d'| are the identifiers of $\psi$ and $\psi'$;
\lstinline|world(d,d')| for all $w \in \mathrm{ComputeWorlds}(\psi)$ with largest $\Diamond$-literal $\psi$, and all $\psi' \in w$, where \lstinline|d| and \lstinline|d'| are the identifiers of $\psi$ and $\psi'$.
$\Pi_\mathit{diamonds}(\phi) \cup \Pi_\mathit{re}(\phi)$ is an ASP implementation of $\mathit{diamonds}(\mathit{reach}(\phi), \phi)$ with an additional merging of some worlds guided by the entailment relation between $\Diamond$-literals.

\section{Evaluation}\label{sec:evaluation}

The ASP encodings presented in Section~\ref{sec:implementation} have been implemented into a new solver, \textsc{S5py}. The solver is written in Python and uses \textsc{clingo} version 5.4.0~\cite{Gebser2016} to search for answer sets. This section reports on an empirical comparison between \textsc{S5py} and \textsc{S5Cheetah} \cite{huang2019solving} on the benchmark used to assess \textsc{S5Cheetah}.
\textsc{S5py} and pointers to benchmark files are provided at \url{http://www.mat.unical.it/~alviano/ICLP2021-s5py.zip}.

The experiments were run on an Intel Xeon 2.4 GHz with 16 GB of memory. Time and memory were limited to 300 seconds and 15 GB; similar limits are
used by \citeN{huang2019solving}, with memory limit decreased by 1 GB to avoid swapping. For each instance solved within these limits, we measured the execution time and the memory usage.

\begin{table}[b]
    \centering
    \caption{Overall number of unsolved instances due to timeouts or memory-outs, average execution time (in seconds) and memory consumption (in MB) on solved instances.}\label{tab:summary}
    \begin{tabular}{rccccc}
        \toprule
        Solver (options) & Unsolved & Timeouts & Memory outs & Avg. Time & Avg. Memory\\
        \cmidrule{1-6}
        \textsc{S5py (full)} & 283 & 189 & 94 & 14.0 & 512\\
        \textsc{S5py (reach)} & \z15 & \z15 & \z0 & 12.1 & \z99\\
        \textsc{S5py (reach+all)} & \z69 & \z69 & \z0 & 15.3 & \z86\\
        \textsc{S5py (reach+conflicts)} & \z66 & \z66 & \z0 & 14.8 & \z90\\
        \textsc{S5py (reach+boxes)} & \z16 & \z16 & \z0 & 12.6 & \z99\\
        \textsc{S5py (reach+diamonds)} & \z15 & \z15 & \z0 & 12.9 & \z96\\
        \textsc{S5Cheetah} & \z30 & \z18 & 12 & 13.1 & 345\\
        \bottomrule
    \end{tabular}
\end{table}

We tested six configurations of \textsc{S5py}:
\begin{enumerate}
    \item \textsc{full}, generation of total worlds, i.e. $\Pi_\mathit{full} \cup \Pi_\mathit{re}(\phi))$;
    \item \textsc{reach}, restriction of each world to reachable propositional atoms, i.e. $\Pi_\mathit{reach} \cup \Pi_\mathit{re}(\phi))$;
    \item \textsc{reach+conflicts}, use of conflict relation, i.e. $\Pi_\mathit{conflicts}(\phi) \cup \Pi_\mathit{re}(\phi))$;
    \item \textsc{reach+boxes}, use of subset relation for $\Box$-literals, i.e. $\Pi_\mathit{boxes}(\phi) \cup \Pi_\mathit{re}(\phi))$;
    \item \textsc{reach+diamonds}, use of subset relation for $\Diamond$-literals, i.e. $\Pi_\mathit{diamonds}(\phi) \cup \Pi_\mathit{re}(\phi))$;
    \item \textsc{reach+all}, use of the three relations above.
\end{enumerate}

Aggregated results are reported in Table~\ref{tab:summary}, where it is directly evident that the generation of total worlds is often infeasible in practice.
We also observe that \textsc{S5py (reach)} and \textsc{S5Cheetah} have a similar performance in terms of solved instances, which confirms that the restriction of each world to reachable propositional literals is a meaningful alternative to the strategy implemented by \textsc{S5Cheetah}.
Finally, we observe that other relations that \textsc{S5py} can use in its ASP encodings do not provide any performance improvement, and actually the use of the conflict relation has a sensible negative impact.

\begin{figure}
    \figrule
    \includegraphics[width=\textwidth]{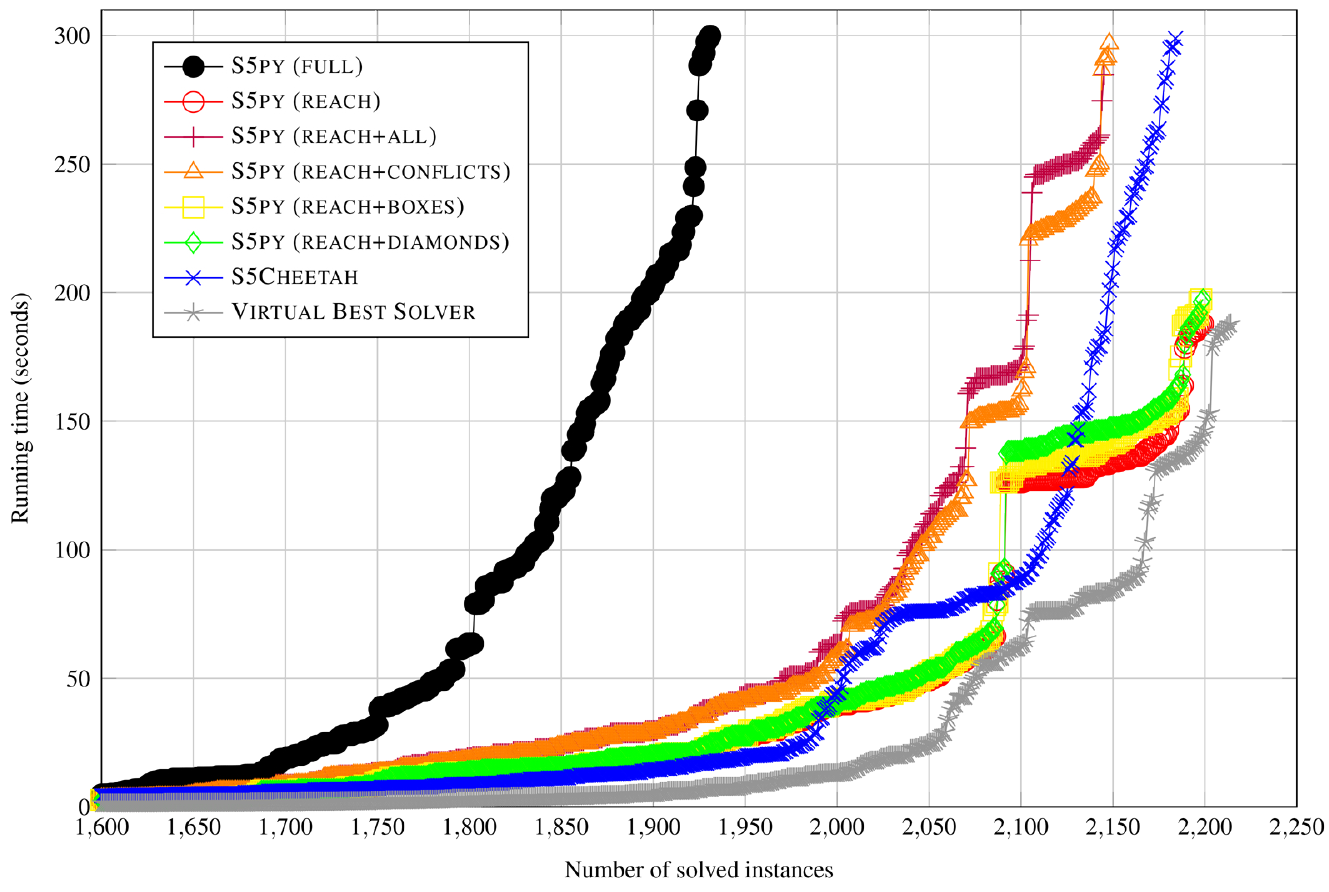}
    \centering
    \caption{Number of solved instances within a time budget}\label{fig:cactus}
    \figrule
\end{figure}

A cactus plot is shown in Figure~\ref{fig:cactus}, where for each solver the solved instances are sorted by increasing execution time. The performance of a virtual best solver is also shown, for which the execution time is the minimum execution time across all solvers. We can observe that there is no particular solver dominating over the others, though \textsc{S5py (reach)} achieves the closest performance to the virtual best solver.

\begin{figure}
    \figrule
    \includegraphics[width=\textwidth]{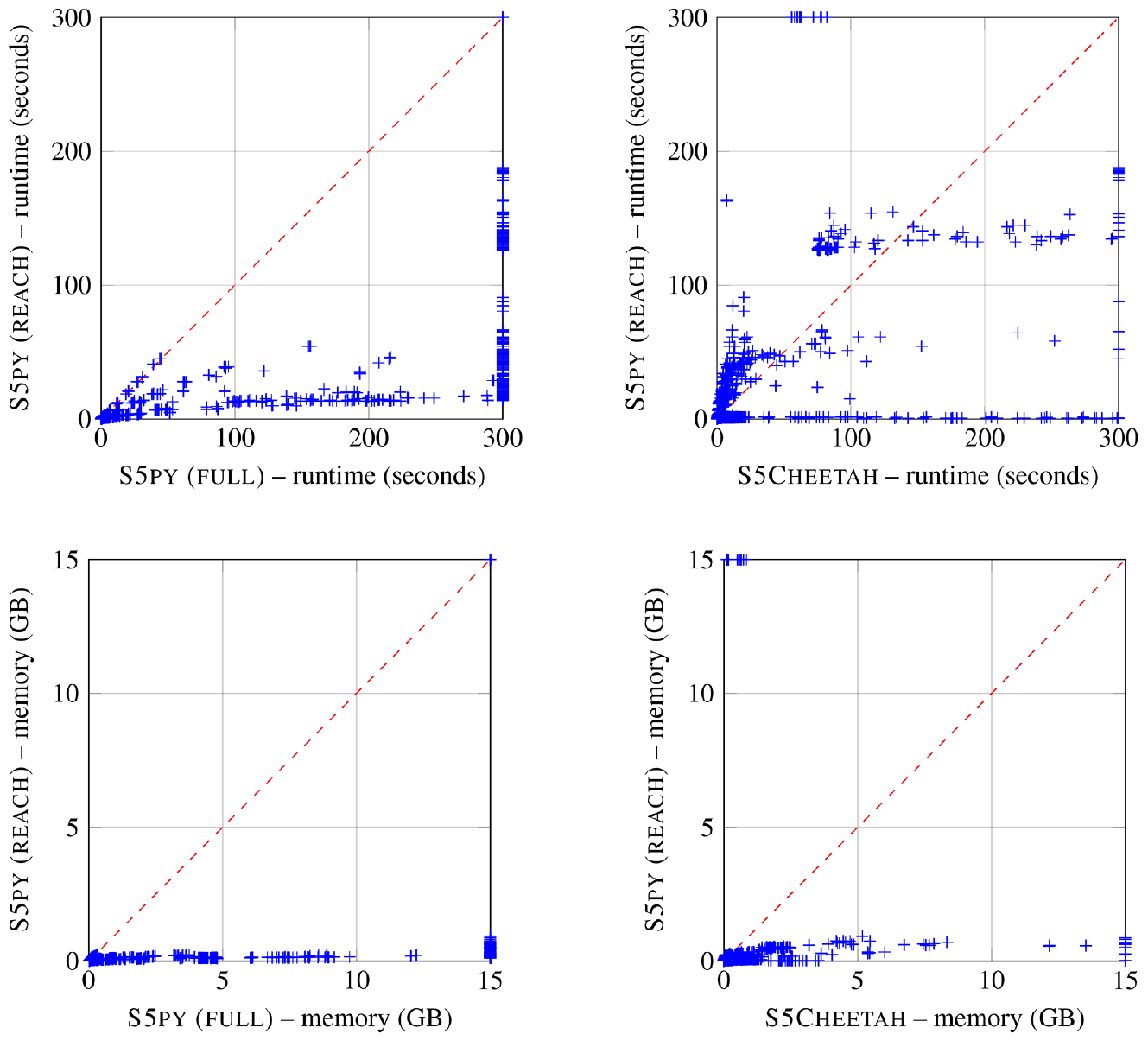}
    \centering
    \caption{Instance by instance comparison on execution time and memory: Impact of the reachability relation on \textsc{S5py} (left) and assessment with respect to the state-of-the-art \textsc{S5Cheetah} (right) in terms of execution time (top) and memory consumption (bottom). Unsolved instances normalised to the limits.}\label{fig:scatter-reachability}
    \figrule
\end{figure}

\begin{figure}
    \figrule
    \includegraphics[width=\textwidth]{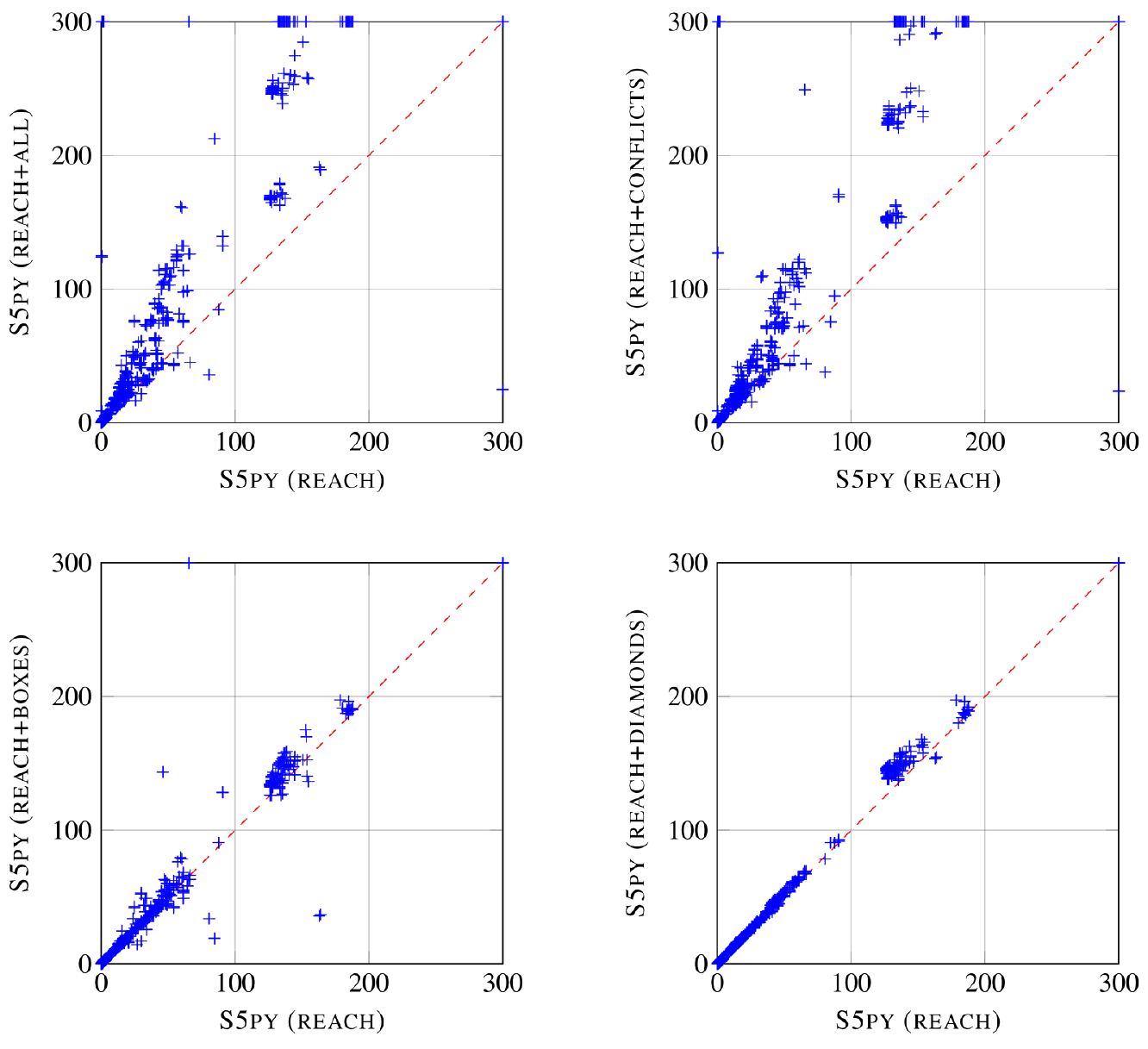}
    \centering
    \caption{Instance by instance comparison on execution time (in seconds): Impact of the entailment and conflict relations on \textsc{S5py}. Unsolved instances normalised to the limit.}\label{fig:scatter}
    \figrule
\end{figure}

An instance by instance comparison of the performance of \textsc{S5py (reach)} versus \textsc{S5py (full)} and \textsc{S5Cheetah} is provided in Figure~\ref{fig:scatter-reachability}, in terms of execution time and memory consumption.
It is quite evident that \textsc{S5py (reach)} has a uniform and sensible improvement over \textsc{S5py (full)}, as all points are essentially below the bisector. Compared to \textsc{S5Cheetah}, \textsc{S5py (reach)} requires less memory in general, as shown in the bottom two plots of Figure~\ref{fig:scatter-reachability}. However, execution time is not always in its favour, which is also evident from the cactus plot in Figure~\ref{fig:cactus}.

Scatter plots shown in Figure~\ref{fig:scatter} confirm that other relations that \textsc{S5py} can use in its ASP encodings often introduce overhead, since almost all points are above the bisector. The overhead is relatively negligible in the case of \textsc{S5py (reach+boxes)} and \textsc{S5py (reach+diamonds)} since the solver has to take into account only one additional subset relation for $\Box$-literals and $\Diamond$-literals, respectively. However, handling the conflict relation imposes a more significant overhead on the solver, since all possible pairs of $\Box$-literals and $\Diamond$-literals need to be considered. Given these results, we conclude that the usage of entailment and conflict relations is unlikely to be justified given its negative impact, while the reachability relation allows \textsc{S5py} to achieve comparable execution time to \textsc{S5Cheetah} while consuming less memory.

\section{Conclusions and Future Work}\label{sec:conclusions}

In this work, we have shown that using Answer Set Programming for implementing solvers for modal logic S5 is a both feasible and performant approach.
Experimental evaluation of the proposed encodings highlights the performance gain achievable by limiting the $\Box$-literals to satisfy in every world, to those potentially reachable (by unit propagation) from the associated $\Diamond$-literals.
In fact, the implemented solver, \textsc{S5py}, achieves a comparable performance to the state-of-the-art SAT-based solver \textsc{S5Cheetah}, with none of the two dominating the other.

Future research directions include: (a) considering combinations of the reachability-based optimisation with other optimisations proposed in literature such as the graph colouring approach implemented by \textsc{S5Cheetah}~\cite{huang2019solving}; (b) defining incremental versions of $\Pi_\mathit{reach}$ to further mitigate the negative impact of oversized propositional formulas; (c) exploring whether optimised implementations can form the basis for solvers in multi-agent settings or for related logics such as modal logic S4 and the implicational fragment of intuitionistic propositional logic (IPC)~\cite{gore_thomson_2019}.



\label{lastpage}
\end{document}